\documentclass[conference]{IEEEtran}
\IEEEoverridecommandlockouts

\usepackage{cite}
\usepackage{times}
\usepackage{epsfig}
\usepackage{graphicx}
\usepackage{amsmath}
\usepackage{amssymb}
\usepackage{booktabs}
\usepackage{algorithm}
\usepackage{algorithmic}
\usepackage{booktabs}
\usepackage{svg}
\usepackage{subcaption}
\usepackage{makecell}
\usepackage{tikz}
\usepackage{times}
\usepackage{epsfig}
\usepackage{mathtools}
\usepackage{amsthm}
\usepackage{url} 

\theoremstyle{plain}
\newtheorem{theorem}{Theorem}[section]
\newtheorem{proposition}[theorem]{Proposition}
\newtheorem{lemma}[theorem]{Lemma}
\newtheorem{corollary}[theorem]{Corollary}
\theoremstyle{definition}

\theoremstyle{remark}
\newtheorem{remark}[theorem]{Remark}


\pagestyle{empty}

\begin{document}

\title{Adaptive Consensus Optimization Method for GANs\footnote{Accepted in IJCNN 2023}}

\author{\IEEEauthorblockN{1\textsuperscript{st} Given Name Surname}
\IEEEauthorblockA{\textit{dept. name of organization (of Aff.)} \\
\textit{name of organization (of Aff.)}\\
City, Country \\
email address or ORCID}
}

\author{\IEEEauthorblockN{Anonymous Authors}}
\author{\IEEEauthorblockN{Sachin Kumar Danisetty}
\IEEEauthorblockA{
\textit{IIIT Hyderabad, India}\\
sachin.danisetty@alumni.iiit.ac.in}
\and
\IEEEauthorblockN{Santhosh Reddy Mylaram}
\IEEEauthorblockA{
\textit{IIIT Hyderabad, India}\\
santhosh.reddy@alumni.iiit.ac.in}
\and 
\IEEEauthorblockN{Pawan Kumar}
\IEEEauthorblockA{
\textit{IIIT Hyderabad, India}\\
pawan.kumar@iiit.ac.in}
}

\maketitle

\begin{abstract}
  We propose a second order gradient based method with ADAM and RMSprop for the training of generative adversarial networks. The proposed method is fastest to obtain similar accuracy when compared to prominent second order methods. Unlike state-of-the-art recent methods, it does not require solving a linear system, or it does not require additional mixed second derivative terms. We derive the fixed point iteration corresponding to proposed method, and show that the proposed method is convergent. The proposed method produces better or comparable inception scores, and comparable quality of images compared to other recently proposed state-of-the-art second order methods. Compared to first order methods such as ADAM, it produces significantly better inception scores. The proposed method is compared and validated on popular datasets such as FFHQ, LSUN, CIFAR10, MNIST, and Fashion MNIST for image generation tasks\footnote{Accepted in IJCNN 2023}. Codes: \url{https://github.com/misterpawan/acom}
\end{abstract}

\section{Introduction and Related Work.}
Recently generative modeling have received much attention with the advent of diffusion based models \cite{dhariwal2021} for text-to-image generation, however, sampling for generative adversarial networks (GAN) \cite{goodfellow2014} still remain order of magnitude fast. Moreover, with bigger architectures as in \cite{kang2023gigagan}, image quality from GAN remain as good as those from diffusion models. We consider the problem of solving the following min max problem 
\begin{align}
    \min_x \max_y f(x,y),  \label{minmax}
\end{align}
where $f:\mathbb{R}^m \times \mathbb{R}^n \rightarrow \mathbb{R}.$ This can be seen as a two-player game, where one agent tries to maximize its objective, whereas, the other agent tries to minimize its objective. 

In this work, we are interested in such optimization problems stemming from generative adversarial networks. Such problems also arise in adversarial training \cite{madry2019deep} and multi-agent reinforcement learning \cite{omidshafiei2017deep}. This is an active area of research, and recent solvers often involve second order derivatives in some way. Using second order derivatives is found to increase robustness and quality of the images generated. The minmax problem \eqref{minmax} above can be seen either as simultaneous minmax or sequential minmax problem. If it is indeed seen as simultaneous minmax, then the solutions will correspond to local Nash equilibrium \cite{jin2019,schaefer2019,schaefer20a}. However, in the current literature, there is no well established clarity on usage of one of these in designing solver for GAN. As pointed out in \cite{jin2019}, it is understood that GANs correspond to sequential min-max, i.e., generator observes the discriminator's action, then generator optimizes, followed by discriminator rather than both generator and discriminator optimizing simultaneously. These two views of simultaneous or sequential minmax leads to a variety of second order gradient methods. 

A class of methods consider minmax interpretation; here it is understood that if the discriminator is optimal, then it must lead the generator loss to approach Jensen-Shannon divergence between real and generated distribution. This view leads to class of methods that suggest us to use variety of divergences or metrics with improved theoretical properties. As was pointed out in \cite{schaefer20a}, the minmax interpretation has two major problem: ``Without regularity constraints, the discriminator can always be perfect" and ``Imposing regularity constraints needs a measure of similarity of images." They claim that imposing regularity is equivalent to forcing the discriminator to map images to similar images. However, it is not easy to mimic similarity of images using such map that corresponds to similarity seen by human perception. Furthermore, in \cite{lucic2017}, authors did not find significant differences in the performance of GANs with various choices of divergence measures. In \cite{mazumdar2018}, it was shown that simultaneous gradient descent (SimGD) on both players leads to additional stable points compared to the case when gradient descent is done sequentially, i.e., by considering one of the player fixed at a time. Moreover, these additional stable points do not correspond to local Nash equilibrium. Considering this concern of unusual additional stable points, recent approaches such as \cite{mazumdar2019} and \cite{balduzzi2018} suggest modifications that lead only to local Nash equilibrium. Another class of methods stems from the original GAN \cite{goodfellow2014}, called metric-agnostic GANs. In original GAN, the loss function is given as follows:
\begin{align*}
    \min_{\mathcal{G}} \max_{\mathcal{D}} \dfrac{1}{2} \mathbb{E}_{x \sim P_{\text{data}}} [\log \mathcal{D}(x)] + \dfrac{1}{2} \mathbb{E}_{x \sim P_G} [\log (1 - \mathcal{D}(x))],
\end{align*}
where $\mathcal{G}$ is the distribution generated by generator, and $\mathcal{D}$ is the classifier provided by discriminator, and $P_{\text{data}}$ is the target. In \cite{arjovsky2017,arjovsky2018}, WGAN was proposed with the following loss function 
\begin{align*}
    \min_{\mathcal{G}} \max_{\mathcal{D}} \mathbb{E}_{x \sim P_{\text{data}}} [\mathcal{D}(x)] - \mathbb{E}_{x \sim P_{\mathcal{G}}} [\mathcal{D}(x)] + \mathcal{F}(\nabla \mathcal{D}), 
\end{align*}
where $\mathcal{F}(\nabla \mathcal{D})$ is infinity if $\sup_x \| \nabla \mathcal{D}(x) \|>1$ and zero elsewhere. Shortly later, WGAN-GP  was proposed in \cite{gulrajani2017}, where the inequality constraint is replaced by $\mathcal{F}(\nabla \mathcal{D}) = \mathbb{E}[(\| \nabla \mathcal{D} \| - 1)^2].$ These methods depend on the choice of norm used to measure gradient $\nabla \mathcal{D}.$ There are quite a few other variants with other measures or norms proposed in Banach-GAN \cite{adler2018}, Sobolev-GAN \cite{mroueh2017}, and Besov-GAN \cite{uppal2019} to measure $\nabla \mathcal{D}.$ These are so-called metric-informed GANs.   

For solving problem \eqref{minmax}, several solvers have been proposed in the past. A straightforward approach is gradient descent-ascent (GDA), in this case, the two players see each of their objectives separately as minimization problem without any regard to the other player's interest. It is well documented that this approach may lead to a cycling behavior \cite{schaefer2019}. Hence, GDA is not a suitable solver for competitive optimization as seen in GANs. In \cite{goodfellow2014}, two scale update rule is proposed, methods that use follow the regularized leader is proposed in \cite{BERGER2007572}, predictive approach is shown in \cite{yadav2017}, solver based on opponent learning awareness is proposed in \cite{foerster2019}. Similarly, some of the sophisticated heuristics based on one agent predicting other agent's next move was proposed in \cite{daskalakis2017,merti2018,rakhlin2013}.     
 In \cite{schaefer2019,schaefer20a}, authors proposed a new method, namely, CGD, for numerical solution of \eqref{minmax}. Compared to some of the methods mentioned before, the CGD method avoids divergence or oscillations, which is typical of some of the methods based on alternating gradient descent. In their later paper \cite{schaefer20a}, 
 they claim that the ACGD \cite{schaefer20a} method provides implicit competitive regularization \cite{arora2019,azizan2019,gunasekar2017,jin2019,ma2017,neyshabur2017}. In \cite{Mescheder2018ICML}, it is shown that unregularized GAN is neither locally nor globally convergent. Some of the above methods could be seen in the framework of preconditioned gradient methods used in other applications \cite{orseau2019preconditioned,Das_2021_WACV,9093265,das2020,kumar2013,BENZI2002418,KUMAR20132251,kumar2014,saad2003iterative,kumar2013b,kumar2015a,Kumar2016c,6900213,kumar2011purely,kumar2010b}. 

In Table \ref{table:updates}, we see update procedures for various algorithms, we notice that, unlike others, an approximate version of CGD involves solving for a linear system. We also notice that except for GDA, all other methods make use of second order terms.  

\tiny
\begin{table}
\begin{center}
\begin{tabular}[H]{l|l}
\toprule 
 Update rule &   Name \\ 
 \hline 
    $\Delta x$ =  $- \nabla_x f$ &  GDA\\
    $\Delta x$ =  $- \nabla_x f - \gamma D_{xy}^2 f \nabla_y f$ & SGA \cite{balduzzi2018} \\
    $\Delta x$ =  $- \nabla_x f - \gamma D_{xy}^2 f \nabla_y f - \gamma D_{xx}^2 f \nabla_x f$ & ConOpt \cite{mescheder2017}\\
    $\Delta x$ =  $- \nabla_x f - \gamma D_{xy}^2 f \nabla_y f + \gamma D_{xx}^2 f \nabla_x f$ & OGDA \cite{daskalakis2017} \\
     $\Delta x$ =  $- (Id + \eta^2 D_{xy}^2 f D_{yx}^2 f)^{-1}$   
    $\left( \nabla_x f - \gamma D_{xy}^2 f \nabla_y f \right)$ & CGD \cite{schaefer2019} \\
    $\Delta x$ = $-\nabla_x f - D_{xx}^2 f \Delta x$ & ACOM \\ 
    \hline 
\end{tabular}
\end{center}
\caption{\label{table:updates}Various update rules for min-max optimization problems.}
\end{table}
\normalsize

\paragraph{Contributions.} In this paper, we propose a new simple update rule for solving \eqref{minmax}. The image quality generated by the proposed method is among the best second order methods, moreover, it is fastest to train compared to all second order methods we compared with. The proposed update rule for the gradients is integrated with RMSprop and ADAM for adaptive learning rate. In particular, we observe that mixed derivative terms as used in existing methods does not seem to be necessary on practical datasets such as MNIST, Fashion MNIST, CIFAR10, FFHQ, and LSUN. The mixed derivative terms in some of the existing methods are motivated from the view that solution corresponds to local Nash equilibrium, however, we don't find it much useful in practice. Moreover, we also show theoretical guarantee for convergence of the proposed method. 

We summarize the main contributions of the paper as follows:
\begin{itemize}
    \item We propose a new second order method called adaptive consensus optimization (ACOM) integrated with adaptive learning rate of RMSProp or ADAM. We show that the proposed method is fastest to train among second order methods, and achieves inception scores as good as existing state-of-the-art. Extensive experiments on five popular datasets are shown.
    \item We show a complete convergence analysis of our method with RMSprop and ADAM. We identify the fixed point iteration, and show that the necessary condition for convergence of the proposed method is satisfied, ensuring at least linear convergence. Although, analysis was done for ConOpt alone in \cite{mescheder2017} and for CGD alone in \cite{schaefer2019}, unified full analysis of the update rule combined with momentum based methods such as ADAM or RMSprop is shown for the first time.  
\end{itemize}

The following sections are organized as follows. In section 2, we describe briefly the GAN and smooth two-player game. In Section 3, we describe the proposed method ACOM, and convergence of the proposed method. Finally, in Section 4, we show the numerical experiments on five popular  datasets MNIST, Fashion MNIST, CIFAR10, LSUN, and FFHQ.

\section{GAN and smooth two-player games.}
We wish to find a Nash-equilibrium of the two player game associated with training GAN. We define Nash-equilibrium as a point $\bar{p} = (\bar{x}, \bar{y})$ if the following two conditions hold
\begin{align*}
    \bar{x} \in \text{arg max}_x f(x, \bar{y}) \quad \text{and} \quad \bar{y} \in \text{arg max}_y g(\bar{x}, y)  
\end{align*}
in some local neighborhood of $(\bar{x}, \bar{y}).$ For differentiable two-player game, associated vector field is given by 
$    V(x, y) = \begin{bmatrix}
        D_x f(x, y) \\
        D_y g(x, y)
    \end{bmatrix},
$    
where 
$$D_xf = \nabla_x f, \: D_yf = \nabla_y g.$$
For a zero sum game, we have $f = -g,$ and the derivative of the vector field is 
\begin{align*}
    V'(x, y) = \begin{bmatrix}
        D_{xx}^{2} f(x, y) & D_{xy}^{2} f(x,y) \\
        -D_{yx}^{2} f(x,y) & -D_{yy}^{2} f(x,y) 
    \end{bmatrix},
\end{align*}
where
 $$   D_{xx}^{2}f = \nabla^2_{xx}f, \: D_{xy}^{2}f = \nabla_{xy}^2f, \: D_{yy}^{2}f = \nabla_{yy}^2f.$$
\begin{lemma}
\label{lem:ifandonlyif}
For zero-sum games, $V'(p)$ is negative semi-definite if and only if $D_{xx}^{2}f(x,y)$ is negative semi-definite and $D_{yy}^{2}f(x,y)$ is positive semi-definite.    
\end{lemma}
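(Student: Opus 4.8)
The plan is to exploit the elementary fact that the definiteness of a (generally non-symmetric) matrix $M$ is governed entirely by its symmetric part $\tfrac{1}{2}(M + M^\top)$, because $z^\top M z = z^\top \tfrac{1}{2}(M + M^\top) z$ holds for every vector $z$. Under this standard convention, $V'(p)$ is negative semi-definite precisely when its symmetric part is, so the whole argument reduces to computing that symmetric part explicitly and reading off a block-diagonal structure.

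First I would record the structural facts about the blocks. Writing $A = D_{xx}^{2} f$, $B = D_{xy}^{2} f$, and $D = D_{yy}^{2} f$, the pure Hessian blocks $A$ and $D$ are symmetric, while equality of mixed partials (Schwarz's theorem, assuming $f$ is twice continuously differentiable) gives $D_{yx}^{2} f = (D_{xy}^{2} f)^\top = B^\top$. Hence the vector field derivative can be written as
\begin{align*}
    V'(p) = \begin{bmatrix} A & B \\ -B^\top & -D \end{bmatrix}.
\end{align*}

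Next I would form the symmetric part. Since $A^\top = A$ and $D^\top = D$, adding $V'(p)$ to its transpose causes the coupling blocks $B$ and $-B^\top$ to cancel, leaving
\begin{align*}
    \tfrac{1}{2}\left( V'(p) + V'(p)^\top \right) = \begin{bmatrix} A & 0 \\ 0 & -D \end{bmatrix}.
\end{align*}
A block-diagonal symmetric matrix is negative semi-definite if and only if each diagonal block is, so this is negative semi-definite exactly when $A = D_{xx}^{2} f$ is negative semi-definite and $-D = -D_{yy}^{2} f$ is negative semi-definite; the latter is equivalent to $D_{yy}^{2} f$ being positive semi-definite. Chaining these equivalences with the first observation yields the claim in both directions simultaneously.

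I do not expect a genuine obstacle here. The only subtlety is conceptual rather than computational: one must fix the convention that negative semi-definiteness of a non-symmetric matrix is defined through its quadratic form (equivalently its symmetric part), and justify the cancellation of the off-diagonal blocks via the symmetry of the Hessian together with Schwarz's theorem. Once the definition is settled, the block-diagonal reduction makes the equivalence immediate, and the ``if and only if'' follows because each implication is read off the same diagonal blocks.
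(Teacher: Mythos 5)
Your argument is correct: the cancellation of the off-diagonal blocks in the symmetric part (equivalently, of the cross terms $v_1^\top D_{xy}^2 f\, v_2 - v_2^\top D_{yx}^2 f\, v_1$ in the quadratic form) is exactly the mechanism behind this lemma, and your explicit remark that semi-definiteness of a non-symmetric matrix must be read through its quadratic form is the right convention to fix. The paper itself gives no proof and simply defers to Mescheder et al.\ \cite{mescheder2017}, whose argument is the same computation phrased directly with quadratic forms, so your write-up is a faithful self-contained version of the intended proof.
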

\begin{proof}
See \cite{mescheder2017}.
\end{proof}
\begin{corollary} \label{cor:negdef}
For zero-sum games, $V'(p)$ is negative semi-definite for any local Nash-equilibrium $\bar{p}.$ Conversly, if $\bar{p}$ is a stationary point of $V(p)$ and $V'(\bar{p})$ is negative-definite, then $\bar{p}$ is a local Nash-equilibrium.  
\end{corollary}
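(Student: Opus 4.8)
The plan is to derive both assertions directly from Lemma~\ref{lem:ifandonlyif} combined with the classical second-order conditions for an unconstrained local maximum. For the forward direction, I would begin from the definition of a local Nash-equilibrium: at $\bar{p} = (\bar{x}, \bar{y})$ the map $x \mapsto f(x, \bar{y})$ attains a local maximum at $\bar{x}$ and $y \mapsto g(\bar{x}, y)$ attains a local maximum at $\bar{y}$. The second-order necessary condition for these maxima yields that $D_{xx}^{2} f(\bar{p})$ is negative semi-definite and $D_{yy}^{2} g(\bar{p})$ is negative semi-definite. Using the zero-sum relation $f = -g$, the latter becomes positive semi-definiteness of $D_{yy}^{2} f(\bar{p})$. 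Feeding these two facts into Lemma~\ref{lem:ifandonlyif} gives that $V'(\bar{p})$ is negative semi-definite, which is exactly the first claim.

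For the converse, suppose $\bar{p}$ is a stationary point of $V$ with $V'(\bar{p})$ negative-definite. Stationarity $V(\bar{p}) = 0$ supplies the first-order conditions $D_x f(\bar{p}) = 0$ and $D_y g(\bar{p}) = 0$. I would then invoke the strict analogue of Lemma~\ref{lem:ifandonlyif}: since definiteness of $V'$ is governed entirely by its symmetric part, a short computation shows
\begin{align*}
    z^\top V'(\bar{p})\, z = z_x^\top D_{xx}^{2} f(\bar{p})\, z_x - z_y^\top D_{yy}^{2} f(\bar{p})\, z_y, \qquad z = (z_x, z_y),
\end{align*}
where the off-diagonal blocks $D_{xy}^{2} f$ and $-(D_{xy}^{2} f)^\top$ cancel under symmetrization. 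Hence negative-definiteness of $V'(\bar{p})$ is equivalent to $D_{xx}^{2} f(\bar{p})$ being negative-definite and $D_{yy}^{2} f(\bar{p})$ being positive-definite, the latter being $D_{yy}^{2} g(\bar{p})$ negative-definite. Combining the first-order conditions with these definite Hessians, the second-order sufficient condition certifies that $\bar{x}$ is a strict local maximum of $f(\cdot, \bar{y})$ and $\bar{y}$ a strict local maximum of $g(\bar{x}, \cdot)$, so $\bar{p}$ is a local Nash-equilibrium.

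The only genuine subtlety, and the step I would treat most carefully, is that Lemma~\ref{lem:ifandonlyif} is stated for semi-definiteness whereas the converse requires the definite version; the displayed identity above handles both cases simultaneously, so no new idea is needed beyond observing that the cross blocks drop out of the quadratic form. The remaining work is routine bookkeeping: tracking the sign flip induced by $f = -g$, and matching the second-order \emph{necessary} conditions to the forward direction and the second-order \emph{sufficient} conditions to the converse.
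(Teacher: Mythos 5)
Your proof is correct and follows exactly the standard argument: the paper itself only cites \cite{mescheder2017} for this corollary, and the route you take --- the cross-block cancellation $z^\top V'(\bar{p})z = z_x^\top D_{xx}^2 f\, z_x - z_y^\top D_{yy}^2 f\, z_y$ to get the (strict) block characterization, combined with the second-order necessary conditions for the forward direction and the second-order sufficient conditions plus stationarity of $V$ for the converse --- is precisely the proof given in that reference. Your explicit handling of the definite versus semi-definite versions of Lemma~\ref{lem:ifandonlyif} is the right point to be careful about and is resolved correctly.
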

\begin{proof}
See \cite{mescheder2017}.
\end{proof}

\subsection{Results for Fixed Point Iteration.}
To analyze the convergence properties of our proposed method, we begin with the classical theorem for convergence of fixed point iterations: 

\begin{proposition}
\label{prop:fixedpoint}
Let $F: \Omega \rightarrow \Omega$ be a continuously differential function on an open subset $\Omega$ of $\mathbb{R}^{n}$ and let $\bar{p} \in \Omega$ be so that
\begin{enumerate}
    \item $F(\bar{p})=\bar{p}$, and
    \item The absolute values of the eigenvalues of the Jacobian $F^{\prime}(\bar{p})$ are all smaller than 1 .
\end{enumerate}
Then there is an open neighborhood $U$ of $\bar{p}$ so that for all $p_{0} \in U$, the iterates $F^{(k)}\left(p_{0}\right)$ converge to $\bar{p}$. The rate of convergence is at least linear. More precisely, the error $\left|F^{(k)}\left(p_{0}\right)-\bar{p}\right|$ is in $\mathcal{O}\left(\left|\lambda_{\max }\right|^{k}\right)$ for $k \rightarrow \infty,$ where $\lambda_{\max }$ is the eigenvalue of $F^{\prime}(\bar{p})$ with the largest absolute value.
\end{proposition}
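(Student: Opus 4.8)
The plan is to reduce the statement to the Banach fixed-point (contraction mapping) theorem applied on a small ball around $\bar p$, with respect to a carefully chosen norm. The central difficulty is that the second hypothesis only controls the \emph{eigenvalues} of $F'(\bar p)$, i.e. its spectral radius $\rho := \rho(F'(\bar p)) = |\lambda_{\max}| < 1$, whereas to run a contraction argument I need the \emph{operator norm} of the Jacobian to be below $1$. For a general (non-symmetric) Jacobian these two quantities differ, so the Euclidean norm will not suffice directly.

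First I would close exactly this gap. Writing $A := F'(\bar p)$, I would invoke the standard fact that for every $\varepsilon > 0$ there is a vector norm $\|\cdot\|_*$ on $\mathbb{R}^n$ whose induced operator norm satisfies $\|A\|_* \le \rho + \varepsilon$. This is obtained from the Jordan (or Schur) decomposition of $A$ by rescaling the basis so that the strictly off-diagonal entries become arbitrarily small; the eigenvalues, sitting on the diagonal, then dominate. Choosing $\varepsilon$ with $q := \rho + \varepsilon < 1$ fixes a norm in which $A$ is a strict contraction.

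Next I would transfer this contraction from the linearization to $F$ itself on a neighborhood. Since $F$ is continuously differentiable, $p \mapsto \|F'(p)\|_*$ is continuous, so there is a closed ball $B = \{ p : \|p - \bar p\|_* \le r \} \subset \Omega$ on which $\|F'(p)\|_* \le q'$ for some $q'$ with $q < q' < 1$. Using $F(\bar p) = \bar p$ together with the fundamental theorem of calculus along the segment from $\bar p$ to $p$, namely $F(p) - \bar p = \int_0^1 F'(\bar p + t(p-\bar p))(p-\bar p)\,dt$, I would deduce both that $\|F(p) - \bar p\|_* \le q' \|p - \bar p\|_* \le q' r < r$, so $F$ maps $B$ into itself, and that $F$ is $q'$-Lipschitz on $B$. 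Banach's theorem then yields that the iterates remain in $B$ and satisfy $\|F^{(k)}(p_0) - \bar p\|_* \le (q')^k \|p_0 - \bar p\|_*$, so $U = \mathrm{int}(B)$ works and convergence is at least linear.

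Finally, for the asymptotic factor $\mathcal{O}(|\lambda_{\max}|^k)$ I would note that $q'$ can be chosen arbitrarily close to $\rho = |\lambda_{\max}|$, and that all norms on $\mathbb{R}^n$ are equivalent, so the estimate in $\|\cdot\|_*$ transfers, up to a fixed constant, to the Euclidean error $|F^{(k)}(p_0) - \bar p|$; the rate is governed by the largest-modulus eigenvalue (if that eigenvalue is defective, a mild refinement through its Jordan structure absorbs the resulting polynomial factor). I expect the first step, producing the adapted norm in which the spectral radius controls the operator norm, to be the main obstacle, since everything afterward is a routine contraction-mapping and continuity argument.
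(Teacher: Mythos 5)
The paper does not actually prove this proposition: the statement is quoted (via the ConOpt paper of Mescheder et al.) and the ``proof'' is a citation to Bertsekas, Proposition 4.4.1. Your argument is the standard self-contained proof of this result (Ostrowski's theorem), and its skeleton is sound: the adapted norm $\|\cdot\|_*$ with induced operator norm $\|F'(\bar p)\|_* \le \rho + \varepsilon < 1$ exists by the scaled Jordan (or Schur) basis construction; continuity of $F'$ propagates the bound to a closed ball; the identity $F(p)-\bar p = \int_0^1 F'(\bar p + t(p-\bar p))(p-\bar p)\,dt$ gives both invariance of the ball and the Lipschitz constant $q'<1$; and norm equivalence transfers the geometric estimate to the Euclidean norm. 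This is essentially the argument in the cited reference, so you have reconstructed the intended proof rather than found a different route.

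The one place you are too quick is the final asymptotic claim. The contraction argument yields $|F^{(k)}(p_0)-\bar p| = \mathcal{O}\bigl((|\lambda_{\max}|+\varepsilon)^k\bigr)$ for each fixed $\varepsilon>0$ (with the admissible neighborhood depending on $\varepsilon$), which is strictly weaker than $\mathcal{O}(|\lambda_{\max}|^k)$. When the dominant eigenvalue is defective, the linearization itself already produces errors of order $k^{m-1}|\lambda_{\max}|^k$, and a polynomial factor of this kind cannot be ``absorbed'' into $\mathcal{O}(|\lambda_{\max}|^k)$ --- it genuinely exceeds it. So either one assumes the dominant eigenvalue is semisimple, or one states the rate in the $\varepsilon$-weakened form; the latter is how Bertsekas phrases the conclusion, and it is what your contraction argument actually establishes. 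This is a defect of the statement as transcribed into the paper rather than of your proof strategy, but your closing sentence claims more than the argument delivers.
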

\begin{proof}
See \cite{Bertsekas99}, proposition 4.4.1.
\end{proof}

\begin{lemma}
\label{eqn:boundh}
Assume that $A \in \mathbb{R}^{n \times n}$ only has eigenvalues with negative real-part and let $h>0$. Then the eigenvalues of the matrix $I+hA$  lie in the unit ball if and only if
\begin{align*}
h<\frac{1}{|\Re(\lambda)|} \frac{2}{1+\left(\frac{\Im(\lambda)}{\Re(\lambda)}\right)^{2}}.
\end{align*}
\end{lemma}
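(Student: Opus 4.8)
The plan is to exploit the spectral mapping property: the eigenvalues of $I + hA$ are exactly the numbers $1 + h\lambda$ as $\lambda$ ranges over the spectrum of $A$. This holds with no diagonalizability assumption, so the statement ``the eigenvalues of $I+hA$ lie in the unit ball'' reduces immediately to the scalar condition $|1 + h\lambda| < 1$ for every eigenvalue $\lambda$ of $A$. I would therefore fix a single eigenvalue $\lambda$ and work with this scalar inequality, recovering the matrix statement at the end.

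Writing $\lambda = \Re(\lambda) + i\,\Im(\lambda)$ with $\Re(\lambda) < 0$ by hypothesis, I would expand
\begin{align*}
|1 + h\lambda|^2 = \bigl(1 + h\Re(\lambda)\bigr)^2 + \bigl(h\,\Im(\lambda)\bigr)^2.
\end{align*}
Imposing $|1+h\lambda|^2 < 1$ and cancelling the constant term yields $2h\Re(\lambda) + h^2\bigl(\Re(\lambda)^2 + \Im(\lambda)^2\bigr) < 0$. Since $h > 0$, dividing by $h$ gives the equivalent inequality $2\Re(\lambda) + h\bigl(\Re(\lambda)^2 + \Im(\lambda)^2\bigr) < 0$, and because $\Re(\lambda) < 0$ forces $|\Re(\lambda)| = -\Re(\lambda) > 0$, I can isolate $h$:
\begin{align*}
h < \frac{-2\Re(\lambda)}{\Re(\lambda)^2 + \Im(\lambda)^2} = \frac{2\,|\Re(\lambda)|}{\Re(\lambda)^2 + \Im(\lambda)^2}.
\end{align*}
Each manipulation here is reversible (the division by $h>0$ and the rearrangement are equivalences), so both directions of the ``if and only if'' come out simultaneously without extra work.

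The final step is cosmetic: rewriting this bound into the form stated in the lemma. Factoring $\Re(\lambda)^2$ from the denominator gives $\Re(\lambda)^2 + \Im(\lambda)^2 = \Re(\lambda)^2\bigl(1 + (\Im(\lambda)/\Re(\lambda))^2\bigr)$, and using $\Re(\lambda)^2/|\Re(\lambda)| = |\Re(\lambda)|$, the bound becomes $\frac{1}{|\Re(\lambda)|}\cdot\frac{2}{1+(\Im(\lambda)/\Re(\lambda))^2}$, exactly the claimed expression.

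I do not anticipate a genuine obstacle, since the computation is elementary algebra on a single complex number. The only points needing care are (i) invoking the spectral mapping so the argument avoids any structural assumption on $A$, and (ii) interpreting the condition correctly when $A$ has several eigenvalues: the spectrum of $I+hA$ sits inside the open unit disk precisely when $h$ is below the minimum of the right-hand side over all $\lambda$, so the stated inequality should be read as the requirement for each eigenvalue $\lambda$.
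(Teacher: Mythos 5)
Your proof is correct and is essentially the standard argument: the paper itself gives no proof, deferring to Lemma~4 of the ConOpt paper (Mescheder et al.), whose proof is the same elementary computation of $|1+h\lambda|^2<1$ via the spectral mapping $\lambda \mapsto 1+h\lambda$. Your closing remark that the stated bound must be read as a condition on \emph{each} eigenvalue (equivalently, $h$ below the minimum of the right-hand side over the spectrum) is a correct and worthwhile clarification of the lemma as quoted, which suppresses the quantifier over $\lambda$.
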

\begin{proof}
See Lemma 4 of \cite{mescheder2017}. 
\end{proof}

For the choice of $F(p) = p + hG(p)$ for some $h>0,$ the Jacobian is given by
$F'(p) = I + hG'(p).$ Hence, using lemma \ref{eqn:boundh} above for $A = G'(p),$ 
we claim convergence using proposition \ref{prop:fixedpoint}. 

\section{ACOM: Adaptive Consensus Optimization Method.}
The proposed method is derived as follows. Consider the partial derivative defined as follows
\begin{align}
    \nabla_x f(x,y) \Delta x = f(x+\Delta x, y) - f(x,y). \label{deriv}
\end{align}
Taking partial derivative with respect to $x,$ we have 
\begin{align*}
    \nabla_{xx}^2 f(x,y) \Delta x = \nabla_x f(x + \Delta x, y) - \nabla_x f(x,y),
\end{align*}
which leads to the following update to the gradient at the new point $x + \Delta x:$
\begin{align}
    \nabla_x f(x + \Delta x, y) = \nabla_x f(x,y) + \nabla_{xx}^2 f(x,y) \Delta x. \label{updatex}
\end{align}
Similarly, new update to the gradient with respect to variable $y$ would be 
\begin{align}
    \nabla_y f(x, y + \Delta y) = \nabla_y f(x,y) + \nabla_{yy}^2 f(x,y) \Delta y. \label{updatey}
\end{align}
That is, the updates \eqref{updatex} and \eqref{updatey} can be seen as first order Taylor series expansion of $\nabla_x f(x+ \Delta x, y)$ and $\nabla_y f(x, y + \Delta y)$ at $x+\Delta x$ and $y + \Delta y$ respectively. As we will see in numerical experiments, these are simple update rules and are as effective as ACGD \cite{schaefer20a} in obtaining high inception scores, while it is much faster than ACGD.  
The full algorithm is shown in Algorithm \ref{acom}. In line number 5 and 12, the update rules for gradients described above are used. As we notice, we supply the updated gradients to the ADAM method, which subsequently uses these to compute first momentum and second momentum terms. As mentioned before, compared to Algorithm CGD or ACGD, ACOM does not require expensive linear system solve, moreover, it does not use mixed derivative terms $D^2_{xy} f$ as in ConOpt \cite{mescheder2017}.  

\subsection{Convergence of ACOM.}
Our convergence proofs follow the framework of \cite{mescheder2017}, and we refer the reader to see theoretical comparisons for other methods. 
\subsubsection{Convergence of ACOM with RMSPROP} 
In ConOpt \cite{mescheder2017}, the fixed point update rules were written for second order update procedure of ConOpt, and similar analysis for momentum based method was done separately. In the following, we do a combined analysis of our second order update rule ACOM with RMSprop. To the best of our knowledge, all methods in the past were analyzed separately, however, combined analysis with momentum was not shown. The iterative update function for ACOM with RMSProp is given by $F(p),$ with $p=(x,y,v_x,v_y)$ is given as follows
\begin{align*}
F(x,y,v_x,v_y) = 
\begin{bmatrix}
    x + \frac{h(D_x f+D_{xx}^{2}f \Delta x)}{\sqrt{v_x+\epsilon}}\\
    y + \frac{h(D_y g+D_{yy}^{2}g \Delta y)}{\sqrt{v_y+\epsilon}}\\
    (1-\beta_1)v_x + \beta_1 (D_x f + D_{xx}^{2}f\Delta x)^2 \\
    (1-\beta_2)v_y + \beta_2 (D_y g+ D_{yy}^{2}g\Delta y)^2
\end{bmatrix},
\end{align*}
where $v_x$,$v_y$ are the second order momentum of gradients of $x,y$ respectively, $ 0 \leq \beta_1,\beta_2 \leq 1 $ and $ h > 0.$
The Jacobian of this update function is
$  F'(x,y,v_x,v_y) = 
 \begin{bmatrix}
     P & Q\\
     R & S
 \end{bmatrix}, \: \text{where,}
$ 

\begin{align*}
P &= 
 \begin{bmatrix}
     1 + \frac{h(D_{xx}^{2}f + D_{xxx}f \Delta x)}{\sqrt{v_x+\epsilon}} & \frac{h(D_{xy}^{2}f + D_{xxy}f \Delta x) }{\sqrt{v_x+\epsilon}} \\
  \frac{h(D_{xy}^{2}g+D_{xyy}g \Delta y)}{\sqrt{v_y+\epsilon}} & 1 + \frac{h(D_{yy}^{2}g+D_{yyy}g \Delta y)}{\sqrt{v_y+\epsilon}}  \\
 \end{bmatrix},  \\
%
Q &= 
 \begin{bmatrix}
          -\frac{3h(D_xf+D_{xx}^{2}f \Delta x)}{2(v_x + \epsilon)^{\frac{3}{2}}} & 0 \\
    0 & -\frac{3h(D_yg+D_{yy}^{2}g \Delta y)}{2(v_y + \epsilon)^{\frac{3}{2}}} \\
 \end{bmatrix}, \\ 
%
R &= 
 \begin{bmatrix}
      \begin{matrix}
  2\beta_1(D_xf+D_{xx}^{2}f\Delta x) \\ 
  \hfill{} (D_{xx}^{2}f+D_{xxx}f\Delta x) 
  \end{matrix}
  & 
  \begin{matrix}
  2\beta_1(D_xf+D_{xx}^{2}f\Delta x)\\ 
    \hfill{} (D_{xy}^{2}f + D_{xxy}f \Delta x) 
  \end{matrix}\\ \\
  \begin{matrix}    
  2\beta_2(D_yg+D_{yy}^{2}g \Delta y) \\
  \hfill{} (D_{xy}^{2}g + D_{xyy}g\Delta y) 
  \end{matrix}
  &
  \begin{matrix} 
  2\beta_2(D_yg+D_{yy}^{2}g \Delta y) \\ 
  \hfill{} (D_{yy}^{2}g + D_{yyy}g\Delta y) 
  \end{matrix}\\
 \end{bmatrix}, \\
S &= 
 \begin{bmatrix}
      1-\beta_1 & 0 \\
    0 & 1-\beta_2 \\
 \end{bmatrix}.
\end{align*}
At any fixed point $(\bar{x},\bar{y}, \bar{v}_x,\bar{v}_y)$, we have $ \Delta x = 0, \Delta y = 0, \bar{v}_x = 0$, $\bar{v}_y = 0$, $D_x$ and $D_y = 0$. 

We have
$F'(\bar{x},\bar{y},\bar{v}_x,\bar{v}_y) = I + hA,$
where 
\begin{align*}
A = 
\begin{bmatrix}
   \frac{D_{xx}^{2}f}{\sqrt{\epsilon}} & \frac{D_{xy}^{2}f}{\sqrt{\epsilon}} & 0 & 0 \\
    \frac{D_{xy}^{2}g}{\sqrt{\epsilon}} & \frac{D_{yy}^{2}g}{\sqrt{\epsilon}} & 0 & 0 \\
    0 & 0 & -\frac{\beta_1}{h} & 0 \\
    0 & 0 & 0 & -\frac{\beta_2}{h} \\ 
\end{bmatrix}.
\end{align*}
For zero-sum game, $f=-g,$ the eigen values of $A$ are the eigen values of $\frac{1}{\sqrt{\epsilon}}V'$, $-\frac{\beta_1}{h}$ and $-\frac{\beta_2}{h}$. By the assumption that $V'$ is negative definite, we have all eigen values of matrix $A$ to have negative real-part. If $h$ satisfies the bound in Lemma \ref{eqn:boundh}, then the eigen values of matrix $F'(p) = I + hA$ lies in unit ball, hence, by proposition \ref{prop:fixedpoint}, the fixed point iteration $F$ is locally convergent towards a local Nash-equilibrium $(\bar{x},\bar{y}, \bar{v}_x,\bar{v}_y)$. For the iterative method to converge, according to Lemma \eqref{lem:ifandonlyif}, the eigenvalues of $D_{xx}^{2}f$ must be less than or equal to zero and for $D_{yy}^{2}f$ it must be greater than or equal to zero as verified empirically also in Figure \ref{fig:eig}. We summarize with the following remark.
\begin{remark}
With $V'$ negative semi-definite (holds for zero-sum game), we have just shown that $A$ is negative semi-definite, hence, for choice of $h$ from lemma \ref{eqn:boundh}, this lemma shows that eigenvalues of $F' = I + h A$ lies in unit ball. Hence item 2 of proposition \ref{prop:fixedpoint} is satisfied, thereby leading to convergence with at least linear rate.
\end{remark}

\subsubsection{Convergence of ACOM with ADAM} 
Similarly, as before, the iterative update function for ACOM with ADAM is given by $F(x,y,m_x,m_y,v_x,v_y)=$
\begin{align*}
  \begin{bmatrix}
    x + \frac{hm_x}{\sqrt{v_x+\epsilon}}\\
    y + \frac{hm_y}{\sqrt{v_y+\epsilon}}\\
    (1-\beta_1)m_x + \beta_1 (D_xf + D_{xx}^{2}f\Delta x) \\  
    (1-\beta_1)m_y + \beta_1 (D_yg + D_{yy}^{2}g\Delta y) \\
    (1-\beta_2)v_x + \beta_2 (D_xf + D_{xx}^{2}f\Delta x)^2 \\
    (1-\beta_2)v_y + \beta_2 (D_yg + D_{yy}^{2}g\Delta y)^2
\end{bmatrix},
\end{align*}

where $m_x$,$m_y$ are the first order momentum of gradients and $v_x$,$v_y$ are the second order momentum of gradients of $x,y$ respectively, $ 0 \leq \beta_1,\beta_2 \leq 1 $ and $ h > 0.$
The Jacobian of this update function is $F'(x,y,m_x, m_y, v_x,v_y) =$
\begin{align*}
 \begin{bmatrix}
     P_1 & P_2 & P_3\\
     Q_1 & Q_2 & Q_3 \\
     R_1 & R_2 & R_3 
 \end{bmatrix}, 
\end{align*}
where,
\begin{align*}
  P_1 &= 
 \begin{bmatrix}
     1 & 0 \\
     0 & 1 
 \end{bmatrix}, \quad 
   P_2 = 
 \begin{bmatrix}
     \frac{h}{\sqrt{v_x+\epsilon}} & 0 \\
     0 &  \frac{h}{\sqrt{v_y+\epsilon}}
 \end{bmatrix}, \\ 
 P_3 &= 
 \begin{bmatrix}
     -\frac{hm_x}{2(v_x+\epsilon)^{\frac{3}{2}}} & 0 \\
     0 & -\frac{hm_y}{2(v_y+\epsilon)^{\frac{3}{2}}}
 \end{bmatrix},
\end{align*}

\small 
\begin{align*}
  Q_1 &= 
 \begin{bmatrix}
    \beta_1(D_{xx}^{2}f+D_{xxx}f\Delta x) & \beta_1(D_{xy}^{2}f+D_{xxy}f\Delta x) \\
    \beta_1(D_{xy}^{2}g+D_{xyy}g\Delta y) & \beta_1(D_{yy}^{2}g+D_{yyy}g\Delta y)
 \end{bmatrix}, \\ 
%
  Q_2 &= 
 \begin{bmatrix}
     1-\beta_1 & 0 \\
     0 & 1-\beta_1
 \end{bmatrix}, \quad   
 Q_3 = 
 \begin{bmatrix}
     0 & 0 \\
     0 & 0
 \end{bmatrix},
\end{align*}
\normalsize

\small 
\begin{align*}
  R_1 &= 
 \begin{bmatrix}
      \begin{matrix}
  2\beta_2(D_xf+D_{xx}^{2}f\Delta x) \\ 
  \hfill{} (D_{xx}^{2}f+D_{xxx}f\Delta x) 
  \end{matrix}
  & 
  \begin{matrix}
  2\beta_2(D_xf+D_{xx}^{2}f\Delta x)\\ 
    \hfill{} (D_{xy}^{2}f + D_{xxy}f \Delta x) 
  \end{matrix}\\ \\
  \begin{matrix}    
  2\beta_2(D_yg+D_{yy}^{2}g \Delta y) \\
  \hfill{} (D_{xy}^{2}g + D_{xyy}g\Delta y) 
  \end{matrix}
  &
  \begin{matrix} 
  2\beta_2(D_yg+D_{yy}^{2}g \Delta y) \\ 
  \hfill{} (D_{yy}^{2}g + D_{yyy}g\Delta y) 
  \end{matrix}\\
 \end{bmatrix}, \\
%
  R_2 &= 
 \begin{bmatrix}
     0 & 0 \\
     0 & 0
 \end{bmatrix}, 
 \quad   
 R_3 = 
 \begin{bmatrix}
     1-\beta_2 & 0 \\
     0 & 1-\beta_2
 \end{bmatrix}.
\end{align*}
\normalsize

Again, at any fixed point $(\bar{x},\bar{y},\bar{m}_x,\bar{m}_y,\bar{v}_x,\bar{v}_y)$, we have $ \Delta x = 0, \Delta y = 0, \bar{m}_x = 0$, $\bar{m}_y = 0$, $\bar{v}_x = 0$, $\bar{v}_y = 0$, $D_xf$ and $D_yf = 0$. 

Writing 
$
F'(\bar{x},\bar{y},\bar{m}_x,\bar{m}_y,\bar{v}_x,\bar{v}_y) = I + hA,
$
where, 
\begin{align}
\label{eqn:matrixA}
\small
  A = 
\begin{bmatrix}
    0 & 0 & \frac{1}{\sqrt{\epsilon}} & 0 & 0 & 0 \\
    0 & 0 & 0 & \frac{1}{\sqrt{\epsilon}} & 0 & 0 \\
    \frac{\beta_1}{h}D_{xx}^{2}f & \frac{\beta_1}{h}D_{xy}^{2}f & \frac{-\beta_1}{h} & 0 & 0 & 0 \\
    \frac{\beta_1}{h}D_{xy}^{2}g & \frac{\beta_1}{h}D_{yy}^{2}g & 0 & \frac{-\beta_1}{h} & 0 & 0 \\
    0 & 0 & 0 & 0 & \frac{-\beta_2}{h} & 0 \\
    0 & 0 & 0 & 0 & 0 & \frac{-\beta_2}{h} 
\end{bmatrix}. 
\end{align}
Unlike RMSProp, here the condition that the real part of eigenvalues of $A$ is negative definite is not easy to show without further assumptions. We leave this as future work.

\subsection{Comparison of Computational Complexity.}
Assuming $x \in \mathbb{R}^m, y \in \mathbb{R}^n,$ then due to additional term $D_{xy}^2$ involved in both ConOpt and ACGD, compared to our method ACOM, there is additional computational cost of the order $O(m^2n^2)$ both for constructing $D_{xy}^{2}$ and for matrix vector operation required (for example, see  steps 2 and 3 of Algorithm 2 in \cite{mescheder2017}).
On the other hand, for ACGD, there are additional cost of order $O(m^3n^3)$ for solving the linear system with matrix of order $mn \times mn$ if direct method is used, and of order $O(mn)$ if iterative methods such as CG as in \cite{schaefer20a} is used. Also, the updates for SGA is more costly due to additional operations (see Algorithm 1 in \cite{balduzzi2018}). For empirically verifying the time complexity, for smaller dataset, in Figure \ref{fig:timechart}, we observe that ACGD is slowest. Remaining methods SGA, OMD, and ConOpt are costlier than our method. Although, ConOpt looks closer, for larger dataset CIFAR10, in Figure \ref{fig:timechart_big}, ConOpt is twice slower than ACOM. The first order methods such as ADAM or SGD are faster per iteration, but qualitatively they never achieve good inception scores. As seen in inception score for CIFAR10, ACGD does not achieve high inception scores early on, hence, it does not have additional advantage compared to ACOM.

\begin{figure}
    \centering
    \begin{subfigure}[t]{0.4\textwidth}
         \includegraphics[width=1.0\linewidth]{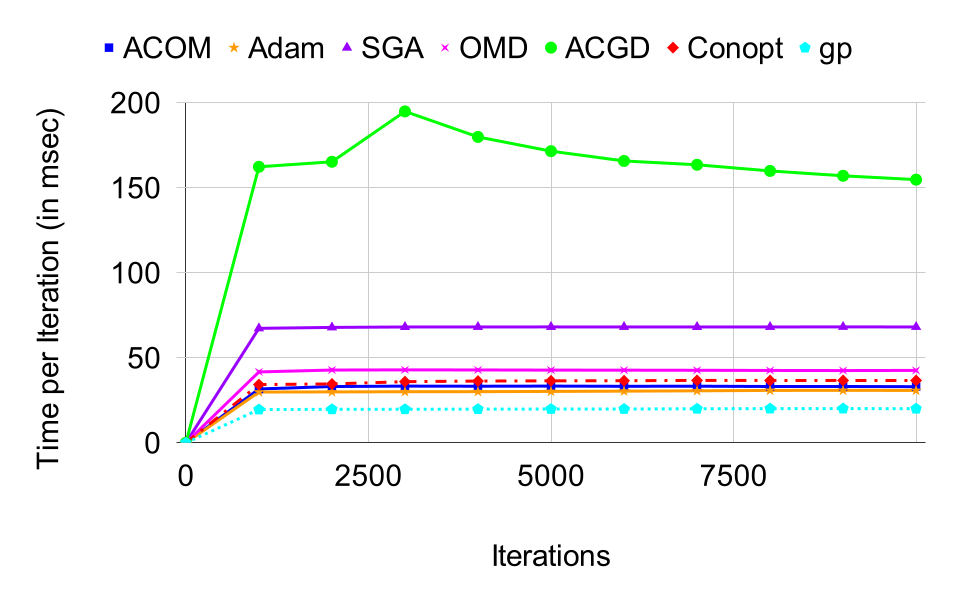}
    \caption{Time Comparision on small dataset.}
    \label{fig:timechart}
    \end{subfigure}
    %
    \begin{subfigure}[t]{0.47\textwidth}
    \begin{center}
        \includegraphics[width=0.8\linewidth]{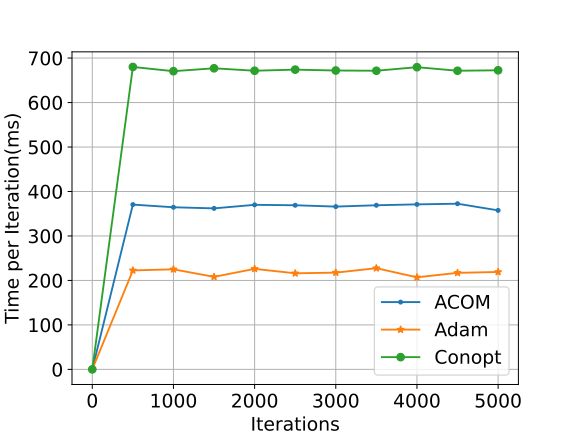}
    \end{center}
    \caption{Time Comparison for large dataset. We don't include ACGD because it is very slow on large datasets. Combining both figures we observe that our method ACOM is fastest second order method.}
    \label{fig:timechart_big}
    \end{subfigure}
\end{figure}

\section{Numerical Experiments.}
\subsection{Experimental setup and machine used.}
Codes: \url{https://github.com/misterpawan/acom}. All experiments were performed on {\tt Intel Xeon E5-2640} v4 processors providing 40 virtual cores, 128GB of RAM and attached to one {\tt NVIDIA GeForce GTX 1080 Ti} GPU providing 14336 CUDA cores and 44 GB of {\tt GDDR5X VRAM}. 
All the codes were written in {\tt python 3.9.1} using {\tt PyTorch (torch-1.7.1)}. For evaluating models, we have used Inception score which is calculated using {\tt inception\_v3} module from {\tt torchvision-0.8.2}. The loss function used is {\tt BCELogitsLoss}. The hyperparameters used for ACOM is mentioned in Algorithm \ref{acom}, for ACGD in Algorithm 1 in \cite{ACGDarXiv}, for ADAM: $\beta_1=0.9, \beta_2=0.999, \alpha=1e^{-3}.$ 
We need not explicitly calculate $D_{xx}^{2}f$ (or $D_{yy}^2 f$) and multiply with $\Delta x$ (or $\Delta y$) in step 4 (or step 11) of Algorithm \ref{acom}, because the whole term $D_{xx}^{2}f\Delta x$ is calculated using PyTorch's {\tt autograd} module.
\tiny 
\begin{algorithm*}[t!]
\caption{\label{acom}ACOM: Adaptive Consensus Optimization Method. Hyperparameters used are $\alpha =2*10^{-4}, \beta_1 = 0.5 , \beta_2=0.99$ and $\epsilon = 10^{-8}$}

\begin{algorithmic}[1]

\REQUIRE $\alpha$: learning rate
\REQUIRE $\beta_1$,$\beta_2$ : Exponential decay rates for the moment estimates
\REQUIRE $f(\theta)$: Stochastic objective function with parameters $\theta$
\REQUIRE $\theta_0$ Initial parameter vector

$m_0 \leftarrow 0$ Initialize $1^{st}$ moment vector

$v_0 \leftarrow 0$ Initialize $2^{nd} $ moment vector

$t \leftarrow 0$ Initialize timestep

\REPEAT

\STATE $t\leftarrow t+1$
\STATE $D_{x} \leftarrow \nabla_x f(x_t,y_t)$ 
\STATE $D_{xx}^{2} \Delta{x} \leftarrow \nabla_x{(D_x)} (x_t - x_{t-1})$ \hfill\COMMENT{Update second order derivative} 
\STATE $D_{t}^x \leftarrow D_{xx}^{2} \Delta{x} + D_{x}$  \hfill\COMMENT{Update first order derivative}
\STATE $m_t^{x} \leftarrow \beta_1 \cdot m_{t-1}^{x} + (1 - \beta_1 ) \cdot D_t^x$ \hfill\COMMENT{Update $x$ momentum term}
\STATE $v_t^{x} \leftarrow \beta_2 \cdot v_{t-1}^{x} + (1 - \beta_2) \cdot ({D_t^x})^2$ \hfill\COMMENT{Update $x$ velocity term} 
\STATE $ \hat{m_t^{x}} \leftarrow m_t^x/(1 - \beta_1^t)$ 
\STATE $ \hat{v_t^{x}} \leftarrow v_t^x/(1 - \beta_2^t)$ 

\STATE $D_{y} \leftarrow \nabla_y f(x_t,y_t)$ 
\STATE $D_{yy}^{2} \Delta{y} \leftarrow \nabla_y{(D_y)} (y_t - y_{t-1})$  \hfill\COMMENT{Update second order derivative}
\STATE $D_{t}^y \leftarrow D_{yy}^{2} \Delta{y} + D_{y}$  \hfill\COMMENT{Update first order derivative}
\STATE $m_t^{y} \leftarrow \beta_1 \cdot m_{t-1}^{y} + (1 - \beta_1 ) \cdot D_t^y$ \hfill\COMMENT{Update $y$ momentum term}
\STATE $v_t^{y} \leftarrow \beta_2 \cdot v_{t-1}^{y} + (1 - \beta_2) \cdot ({D_t^y})^2$ 
\hfill\COMMENT{Update $y$ velocity term}
\STATE $ \widehat{m}_t^{y} \leftarrow m_t^y/(1 - \beta_1^t)$ 
\STATE $ \widehat{v}_t^{y} \leftarrow v_t^y/(1 - \beta_2^t)$ 

\STATE $x_{t+1} \leftarrow x_{t} - \alpha \cdot \widehat{m}_t^x/(\sqrt{\widehat{v}_t^x} + \epsilon)$ \hfill \COMMENT{Update $x$}
\STATE $y_{t+1} \leftarrow y_{t} - \alpha \cdot \widehat{m}_t^y/(\sqrt{\widehat{v}_t^y} + \epsilon)$ \hfill \COMMENT{Update $y$}

\UNTIL{$x_t,y_t$ converged}

\ENSURE $(x,y)$

\end{algorithmic}
\end{algorithm*}
\normalsize

\begin{figure*}[t!]
    \centering
      \begin{subfigure}[t]{0.18\textwidth}
        \centering
        \includegraphics[height=0.9in]{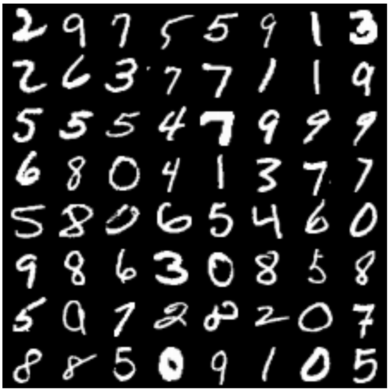}
        \caption{Real}
    \end{subfigure}
    ~
    \begin{subfigure}[t]{0.18\textwidth}
        \centering
        \includegraphics[height=0.9in]{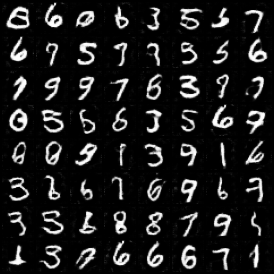}
        \caption{ACOM}
    \end{subfigure}%
    ~ 
    \begin{subfigure}[t]{0.18\textwidth}
        \centering
        \includegraphics[height=0.9in]{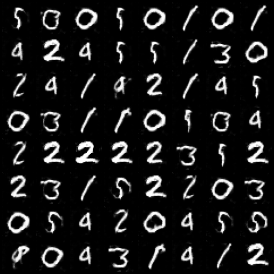}
        \caption{ADAM}
    \end{subfigure}
    ~ 
    \begin{subfigure}[t]{0.18\textwidth}
        \centering
        \includegraphics[height=0.9in]{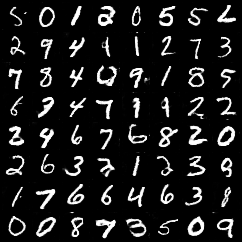}
        \caption{ACGD}
    \end{subfigure}
    ~ 
    \begin{subfigure}[t]{0.18\textwidth}
        \centering
        \includegraphics[height=0.9in]{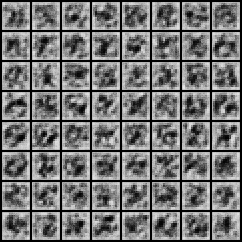}
        \caption{ConOpt}
    \end{subfigure}
    \caption{\label{fig:mnist}Images Generated for MNIST}
\end{figure*}

\begin{figure*}[t!]
    \centering 
    \begin{subfigure}[t]{0.18\textwidth}
        \centering
        \includegraphics[height=0.9in]{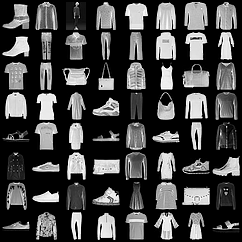}
        \caption{Real}
    \end{subfigure}
    ~
    \begin{subfigure}[t]{0.18\textwidth}
        \centering
        \includegraphics[height=0.9in]{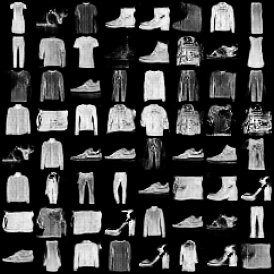}
        \caption{ACOM}
    \end{subfigure}%
    ~ 
    \begin{subfigure}[t]{0.18\textwidth}
        \centering
        \includegraphics[height=0.9in]{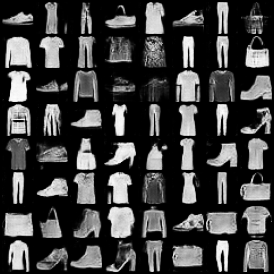}
        \caption{ADAM}
    \end{subfigure}
    ~ 
    \begin{subfigure}[t]{0.18\textwidth}
        \centering
        \includegraphics[height=0.9in]{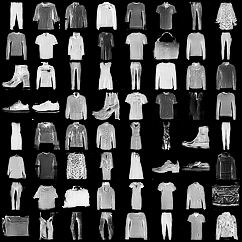}
        \caption{ACGD}
    \end{subfigure}
    ~ 
    \begin{subfigure}[t]{0.18\textwidth}
        \centering
        \includegraphics[height=0.9in]{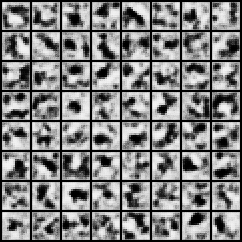}
        \caption{ConOpt}
    \end{subfigure}
    
    \caption{\label{fig:fminst}Generated Images for Fashion MNIST}
\end{figure*}

\begin{figure*}[t!]
    \centering
    \begin{subfigure}[t]{0.18\textwidth}
        \centering
        \includegraphics[height=0.9in]{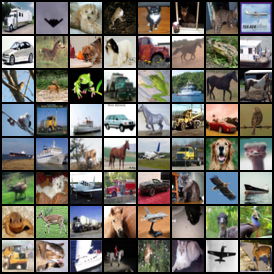}
        \caption{Real}
    \end{subfigure}%
    ~
    \begin{subfigure}[t]{0.18\textwidth}
        \centering
        \includegraphics[height=0.9in]{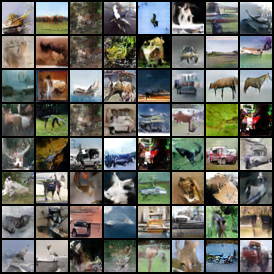}
        \caption{ACOM}
    \end{subfigure}%
    ~ 
    \begin{subfigure}[t]{0.18\textwidth}
        \centering
        \includegraphics[height=0.9in]{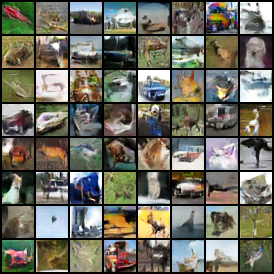}
        \caption{ADAM}
    \end{subfigure}
    ~ 
    \begin{subfigure}[t]{0.18\textwidth}
        \centering
        \includegraphics[height=0.9in]{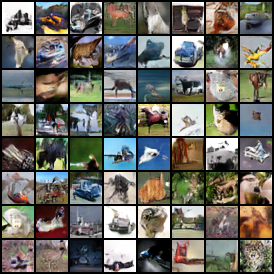}
        \caption{ACGD}
    \end{subfigure}
    ~ 
    \begin{subfigure}[t]{0.18\textwidth}
        \centering
        \includegraphics[height=0.9in]{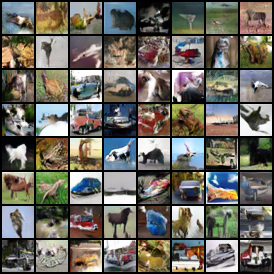}
        \caption{ConOpt}
    \end{subfigure}
    
    \caption{\label{fig:cifar}Generated Images for CIFAR10 \cite{cifar10}.}
    
\end{figure*}

\begin{figure*}[t!]
    \centering
      \begin{subfigure}[t]{0.18\textwidth}
        \centering
        \includegraphics[height=0.9in]{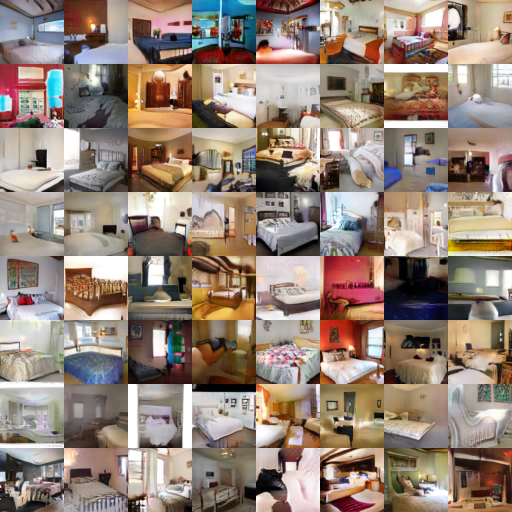}
        \caption{Real}
    \end{subfigure}
    ~
    \begin{subfigure}[t]{0.18\textwidth}
        \centering
        \includegraphics[height=0.9in]{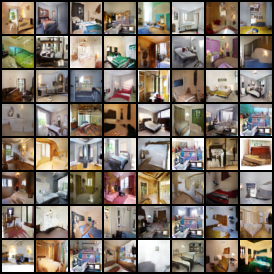}
        \caption{ACOM}
    \end{subfigure}%
    ~ 
    \begin{subfigure}[t]{0.18\textwidth}
        \centering
        \includegraphics[height=0.9in]{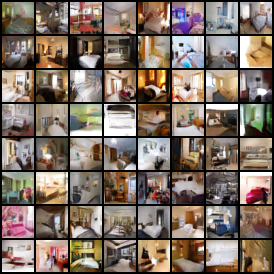}
        \caption{ADAM}
    \end{subfigure}
    ~ 
    \begin{subfigure}[t]{0.18\textwidth}
        \centering
        \includegraphics[height=0.9in]{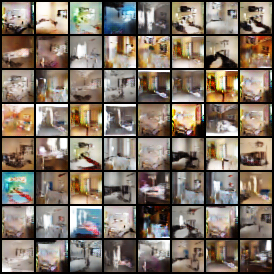}
        \caption{ACGD}
    \end{subfigure}
    ~ 
    \begin{subfigure}[t]{0.18\textwidth}
        \centering
        \includegraphics[height=0.9in]{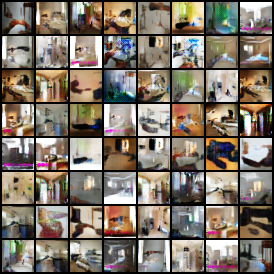}
        \caption{ConOpt}
    \end{subfigure}
    \caption{\label{fig:lsun}Images Generated for LSUN Bedroom \cite{LSUN}.}
\end{figure*}

\begin{figure*}[t!]
    \centering
    \begin{subfigure}[t]{0.33\textwidth}
        \centering
                \includegraphics[height=1.5in]{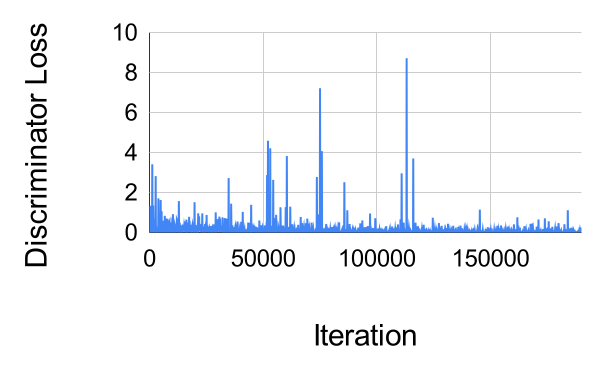}
        \caption{\label{fig:discriminator_loss}Discriminator loss}
    \end{subfigure}%
    ~
    \centering
    \begin{subfigure}[t]{0.33\textwidth}
        \centering
        \includegraphics[height=1.5in]{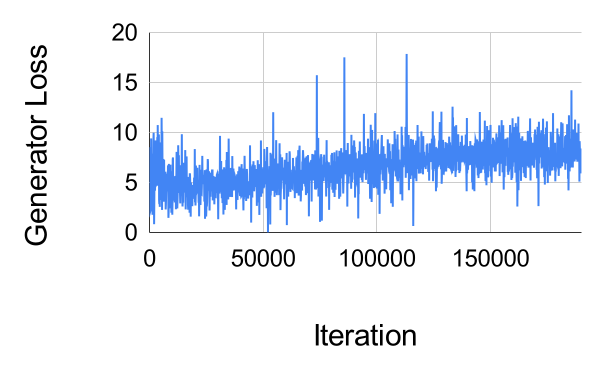}
        \caption{\label{fig:generator_loss}Generator loss}
    \end{subfigure}%
     ~
    \centering
    \begin{subfigure}[t]{0.33\textwidth}
        \centering
        \includegraphics[height=1.5in]{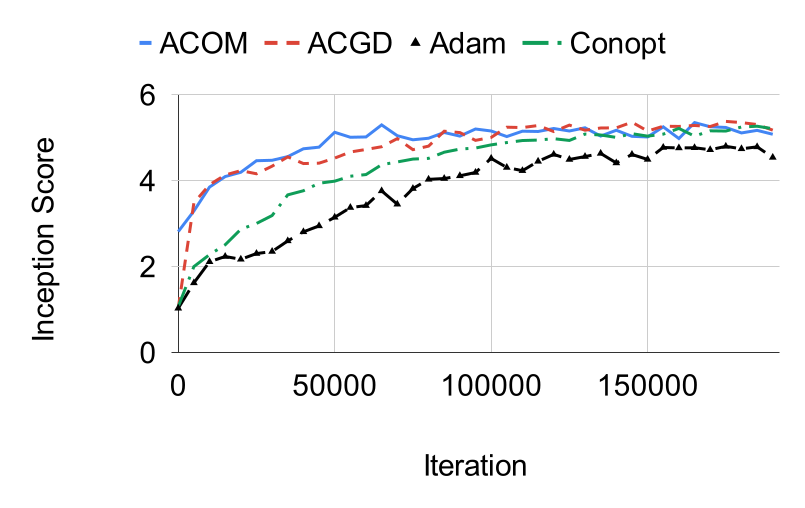}
        \caption{\label{fig:inception}PyTorch Inception Scores on CIFAR10.}
    \end{subfigure}%
    
    \caption{\label{fig:loss} Loss and Inception Scores.}
    
\end{figure*}

\begin{figure*}[t!]
    \centering
      \begin{subfigure}[t]{0.23\textwidth}
        \centering
        \includegraphics[height=0.9in]{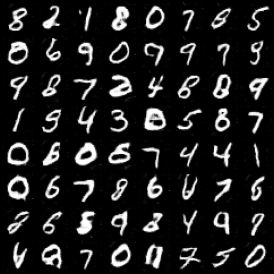}
        \caption{MNIST}
    \end{subfigure}
    ~
    \begin{subfigure}[t]{0.23\textwidth}
        \centering
        \includegraphics[height=0.9in]{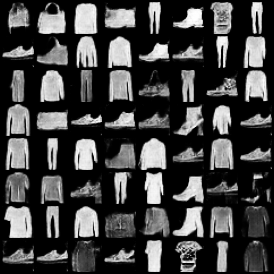}
        \caption{Fashion MNIST}
    \end{subfigure}%
    ~
    \begin{subfigure}[t]{0.23\textwidth}
        \centering
        \includegraphics[height=0.9in]{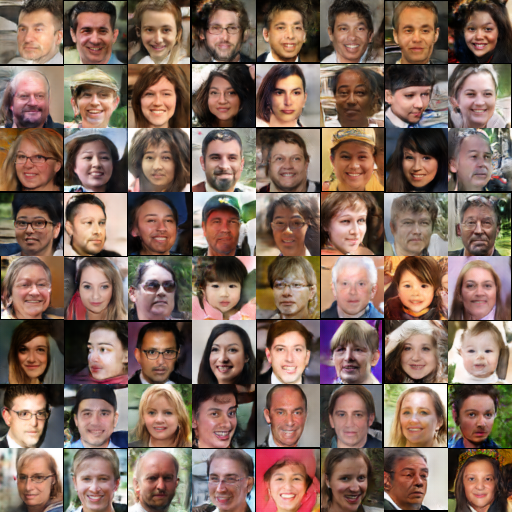}
        \caption{FFHQ on $64 \times 64$ images.}
        \label{fig:ffhq}
    \end{subfigure}
   ~
      \begin{subfigure}[t]{0.23\textwidth}
         \includegraphics[width=1.0\linewidth]{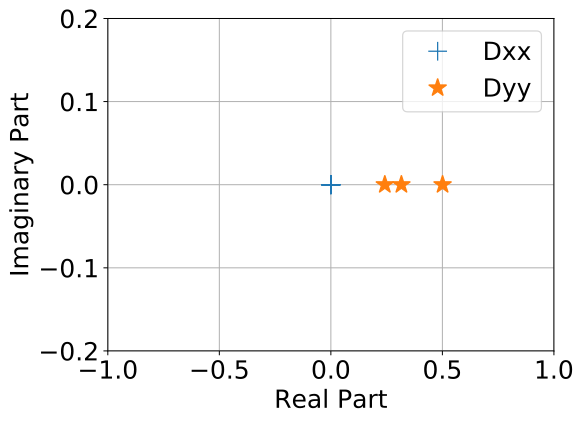}
    \caption{Eigen values of $D_{xx}^{2}f$ and $D_{yy}^{2}f.$}
    \label{fig:eig}
    \end{subfigure}
    \caption{\label{fig:rmsprop} Left three: Generated images of ACOM with RMS prop. Right: Eigenvalue plot.}
\end{figure*}

\subsection{GAN architecture and loss function used.}
We use the {\tt DC-GAN} architecture \cite{radford2016unsupervised} as shown in Table \ref{table:generator} and Table \ref{table:discriminator}. There are various other GAN architectures, and it will be beyond the scope of this work to do extensive comparison with all these. The loss function used is {\tt BCELogitsLoss}. The latent variable $z$ is randomly sampled from standard normal distribution $\mathcal{N}(0,1),$ followed by three convolution layers with {\tt ReLU} activations and block normalizations. 
\begin{table}[h!]
\centering
	\begin{tabular}{c|c|c|c|c}
		\hline
		Module            & Kernel & Stride & Pad & Shape \\
		\hline
		
		Input  & N/A    & N/A    & N/A &  $z \in \mathbb{R}^{100} \sim \mathcal{N}(0, I) $ \\
		Conv, BN, ReLU & $4\times 4$  & 1  & 0    & $100 \rightarrow 1024 $  \\
		Conv, BN, ReLU & $4\times 4$  & 2  & 1    & $1024 \rightarrow 512 $ \\
		Conv, BN, ReLU & $4\times 4$  & 2  & 1    & $512 \rightarrow 256 $  \\
		Conv, Tanh     & $4\times 4$  & 2  & 1    & $256 \rightarrow 3 $  \\
		\hline \\
	\end{tabular}%
	\caption{Generator architecture for CIFAR10 experiments.}
	\label{table:generator}
\end{table}

\begin{table} 
\centering 
	\begin{tabular}{c|c|c|c|c}
		\hline
		Module            & Kernel & Stride & Pad & Shape \\
		\hline
		
		Input  & N/A        & N/A    & N/A &  $z \in \mathbb{R}^{100} \sim \mathcal{N}(0, I) $ \\
		Input  & N/A        & N/A    & N/A &  $x \in \mathbb{R}^{3{\times}32{\times}32}$ \\
		Conv, LeakyReLU     & $4\times 4$  & 2  & 1    & $3 \rightarrow 256 $  \\
		Conv, BN, LeakyReLU & $4\times 4$  & 2  & 1    & $256 \rightarrow 512 $ \\
		Conv, BN, LeakyReLU & $4\times 4$  & 2  & 1    & $512 \rightarrow  1024$  \\
		Conv, Sigmoid       & $4\times 4$  & 1  & 0    & $1024 \rightarrow 1 $  \\
		\hline \\
	\end{tabular}%
	\caption{Discriminator architecture for CIFAR10 experiments}
	\label{table:discriminator}
\end{table}
\normalsize

\subsection{Results for MNIST and Fashion MNIST.}
In Figures \ref{fig:mnist} and \ref{fig:fminst}, we show the generated images for MNIST \cite{lecun-mnisthandwrittendigit-2010} and Fashion MNIST \cite{xiao2017fashionmnist} datasets. We observe that the images generated by ACOM is comparable to real data and those generated by ACGD. We remark here that ConOpt performed poorly for these two dataset, this could be due to batch normalization as mentioned in original paper \cite{mescheder2017}. These datasets were not tested before in ConOpt paper. Similarly, for the other dataset Fashion MNIST, the output generated images from our method is comparable to real sample. For both these datasets, our method was fastest to train. Our proposed method is also effective with RMSprop, in Figure \ref{fig:rmsprop}, we show generated images for ACOM with RMSProp; we observe that the generated images are of similar quality for both ADAM and RMSprop.   

\subsection{Results for CIFAR10, LSUN and FFHQ}

To compare our method on standard CIFAR10 dataset \cite{cifar10}, in subfigures of Figure \ref{fig:cifar}, we compare sample generated datasets from ACOM, ADAM, ACGD, and ConOpt. The images generated by our method ACOM is comparable to real sample. Since for CIFAR10, a standard metric to compare is inception score, in subfigure \ref{fig:inception} in Figure \ref{fig:loss}, we plot the inception scores for these methods, we find that our method achieves high inception score much earlier than both ACGD and ConOpt for the same GAN architecture as mentioned above. For reference, we have also plotted discriminator and generator losses in Figure \ref{fig:discriminator_loss} and \ref{fig:generator_loss} respectively.  However, we must mention here that the first order methods such as ADAM never achieved comparable inception scores as those of second order methods; this observation was also found in previous works such as \cite{schaefer20a,mescheder2017}, where other first order methods were also compared. Hence, existing literature and our comparison suggest that qualitative improvements are seen with second order methods. Also, as shown before, our method is fastest to train and requires less memory among second order methods. In Figure \ref{fig:lsun} and \ref{fig:LSUNbridges3} (more samples only for our method), we compare the generated images for LSUN\footnote{https://www.yf.io/p/lsun} \cite{LSUN} bedroom dataset for $32\times32$ images; we find that the generated images are close to state-of-the-art. Lastly, in Figure \ref{fig:ffhq} and \ref{fig:FFHQ2} (more samples only for our method), we show images generated from FFHQ dataset\footnote{https://github.com/NVlabs/ffhq-dataset}; this dataset offers a significant variety in terms of ethnicity, age, viewpoint, image background, and lighting for face images. To make training feasible on our machines, we down sampled the original images from $128 \times 128$ (thumbnail images) to $64 \times 64.$ We find that the generated images are close to realistic, and we see good diversity in the generated images. Some more generated images from ACOM are shown in Figures \ref{fig:FFHQ2}, \ref{fig:LSUNbridges3} and \ref{fig:LSUNrest3}.

\begin{figure}[t!]
    \centering
        \includegraphics[scale=0.8]{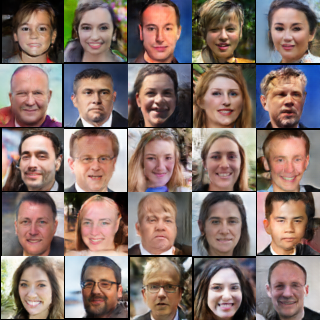}    
    \caption{\label{fig:FFHQ2}Generated Sample for FFHQ Dataset.}
\end{figure}

\begin{figure}[t!]
    \begin{center}
        \includegraphics[scale=0.48]{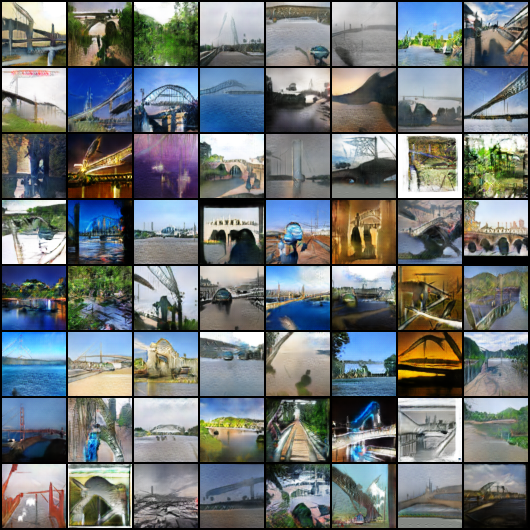}
    \caption{\label{fig:LSUNbridges3}Generated Sample for LSUN Bridges Dataset\cite{LSUN}.}
    \end{center}
\end{figure}

\begin{figure}[h!]
    \centering
        \includegraphics[scale=0.48]{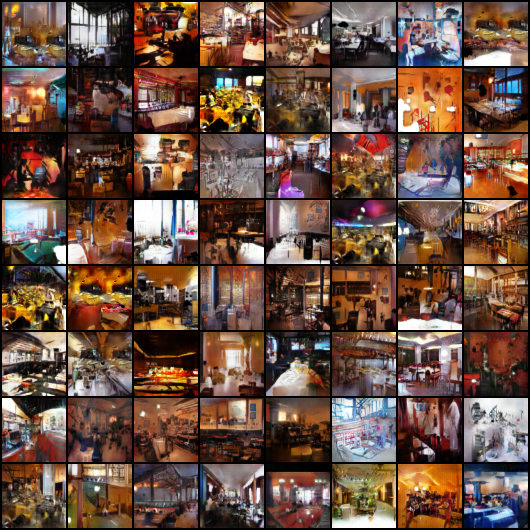}    
    \caption{\label{fig:LSUNrest3}Generated Sample for LSUN Restaurant Dataset\cite{LSUN}.}
\end{figure}

\section{Conclusion.}
We proposed a simple second order update rule, which shows state-of-the-art quality for output images, moreover, it is fastest to train among all recent second order methods. More precisely, in our method, the recent updated gradient is passed on in ADAM or RMSprop method for first and second order momentum calculations. Our method does not involve any mixed derivatives (as in ConOPT) or it does not involve solving costly linear system (as in ACGD) contrary to other recent second order methods. When comparing the well known inception score on standard CIFAR10 dataset, our inception scores is among the best. Our experiments suggest that the mixed derivatives terms in the solver may only be useful for artificial toy example cases, in practice (as well as in theory as proved), for practical datasets (five state-of-the-art datasets) such terms are unnecessary, and using these leads to slow training. We showed a rigorous convergence analysis of the proposed method seen as a fixed point iteration, which to the best of our knowledge is the only complete analysis of the full algorithm (second order update with momentum), which is not done in other existing second other methods. In future, we would like to see how the proposed method behaves for other types of GAN architectures and losses.

\section*{Acknowledgement}
Supported by Qualcomm Faculty Award and MAPG grant.


{\small
\bibliographystyle{ieee_fullname}
\bibliography{mybib}
}

\end{document}